\documentclass{article}

    \PassOptionsToPackage{numbers, compress}{natbib}
\usepackage[preprint]{neurips_2026}

\usepackage[utf8]{inputenc} 
\usepackage[T1]{fontenc}    
\usepackage{hyperref}       
\usepackage{url}            
\usepackage{booktabs}       
\usepackage{amsfonts}       
\usepackage{nicefrac}       
\usepackage{microtype}      
\usepackage{xcolor}         
\usepackage{subcaption}

\usepackage{blindtext}
\usepackage[normalem]{ulem}
\usepackage{cancel}
\usepackage{mathrsfs}

\usepackage[mathscr]{euscript}
\usepackage{amsthm}
\usepackage{amsmath}
\usepackage{amsfonts}
\usepackage{bbm}
\usepackage{xcolor}
\usepackage[numbers]{natbib}
\usepackage{mhequ}
\usepackage{relsize}
\usepackage{tikz}
\usepackage{mathtools}
\usetikzlibrary{arrows.meta, decorations.pathmorphing, positioning, decorations.markings}

\usepackage{xr}

\DeclareMathOperator*{\argmax}{arg\,max}
\DeclareMathOperator*{\argmin}{arg\,min}

\newcommand{\bR}{\mathbb{R}}
\newcommand{\bN}{\mathbb{N}}

\newcommand{\cT}{\mathcal{T}}

\newcommand{\cB}{\mathcal{B}}

\newcommand{\cL}{\mathcal{L}}

\newcommand{\sfE}{{\sf E}}

\newcommand{\sfP}{{\sf P}}

\newcommand{\rt}{\right}
\newcommand{\lt}{\left}

\makeatletter
\newcommand*{\indep}{%
  \mathbin{%
    \mathpalette{\@indep}{}%
  }%
}
\newcommand*{\nindep}{%
  \mathbin{
    \mathpalette{\@indep}{\not}
  }%
}
\newcommand*{\@indep}[2]{%
  \sbox0{$#1\perp\m@th$}
  \sbox2{$#1=$}
  \sbox4{$#1\vcenter{}$}
  \rlap{\copy0}
  \dimen@=\dimexpr\ht2-\ht4-.2pt\relax
  \kern\dimen@
  {#2}%
  \kern\dimen@
  \copy0 
} 
\makeatother

\newcommand{\ba}{\begin{align*}}
\newcommand{\ea}{\end{align*}}

\newcommand{\be}{\begin{equs}}
\newcommand{\ee}{\end{equs}}

\usepackage{amsthm}

\usepackage{amsthm}

\newtheorem{theorem}{Theorem}
\newtheorem{lemma}{Lemma}
\newtheorem{proposition}{Proposition}

\theoremstyle{definition}

\theoremstyle{remark}

\title{MARCEDES: Score-based causal discovery under non-Gaussianity with continuous optimization
}

\author{%
  Anamitra~Chaudhuri
  \thanks{Part of this work was conducted while the author was at The University of Texas at Austin.} \\
  Department of Statistics\\
  Florida State University\\
  Tallahassee, FL 32306 \\
  \texttt{achaudhuri@fsu.edu} \\
  \And
  Anirban Bhattacharya \\
  Department of Statistics \\
  Texas A\&M University \\
  College Station, TX 77843 \\
  \texttt{anirbanb@stat.tamu.edu} \\
  \AND
  Yang Ni \\
  Department of Statistics and Data Science \\
  The University of Texas at Austin \\
  Austin, TX 78705 \\
  \texttt{yang.ni@austin.utexas.edu} \\
}

\begin{document}

\maketitle


\begin{abstract}
We consider the problem of learning the underlying causal directed acyclic graph (DAG) structure corresponding to a structural equation model (SEM) with non-Gaussian errors. Motivated by an intentionally misspecified non-Gaussian SEM with all Laplace errors, we first introduce the mean absolute residual risk, defined over the space of all real matrices, and show that, asymptotically, the risk of the true weighted causal DAG matrix is strictly smaller than that of any other matrix.
Nevertheless, to enhance generality and account for high-dimensional and finite-sample settings, we further incorporate row-specific sparsity penalties along with a soft DAG constraint to derive a continuous score function over the space of real matrices. Accordingly, we propose a score-based DAG learning method, named MARCEDES, formulated as an unconstrained score minimization problem, which can be efficiently solved using gradient-based optimization techniques, thereby circumventing the challenges associated with constrained optimization.
Furthermore, we develop a computational algorithm to handle the non-smoothness of the score objective and to enable optimal tuning of row-specific sparsity penalties under a generalized Bayes framework. Finally, we demonstrate the efficiency and improved performance of the proposed method over existing approaches through an extensive simulation study.
\end{abstract}

\section{Introduction}

It is a fundamental problem to learn the underlying graphical structure, often encoding the underlying directed causal relationships in complex systems, from observational data arising in various domains such as public health \cite{shen2020challenges}, genomics \cite{sachs2005causal}, economics \cite{imbens2004nonparametric}, philosophy \cite{glymour2019review}, and artificial intelligence \cite{xia2021causal}. 
However, learning directed acyclic graphs (DAGs) from data is computationally challenging due to the super-exponential size of the DAG space \cite{andersson1997characterization}, the difficulty of enforcing the combinatorial constraint of acyclicity \cite{zheng2018dags}, and the fact that observational data generally identify DAGs only up to Markov equivalence classes \cite{heckerman1995learning}. Consequently, developing scalable and reliable methods for DAG learning remains a central challenge in modern machine learning and statistics.

Existing methods for estimating Markov equivalence classes are commonly grouped into constraint-based, score-based, and hybrid approaches \cite{drton2017structure}. Constraint-based methods, including PC \cite{spirtes2001causation}, FCI \cite{spirtes2001anytime}, RFCI \cite{colombo2012learning} etc. recover graph structure by testing conditional independence relations, while score-based methods instead optimize a scoring criterion over DAGs or their equivalence classes, with GES \cite{chickering2002optimal} being a prominent example. 
While these methods and their many variants have been extensively developed for Gaussian DAG models \cite{giudici2003improving, grzegorczyk2008improving, goudie2016gibbs, kuipers2017partition, kalisch2007estimating, maathuis2009estimating}, they generally target recovery of the Markov equivalence class. 
In contrast, non-Gaussian DAG models can be exactly identifiable under suitable assumptions \cite{shimizu2006linear}, but the corresponding methodological literature is comparatively limited. Existing approaches are often based on ICA \cite{comon1994independent} or rely on causal ordering estimation \cite{tashiro2014parcelingam, hyvarinen2013pairwise, wang2020high} and independence testing \cite{shimizu2011directlingam, zhao2022learning}, making their accuracy sensitive to intermediate estimation and testing steps. Despite showing promising empirical performance \cite{hoyer2009bayesian, shimizu2014bayesian, chaudhuri2025consistent2}, score-based methods are especially scarce in this setting because likelihood-based scores require choosing a tractable non-Gaussian working model, and any such choice inevitably introduces model misspecification when the true error distribution is unknown; thus, the score must be carefully designed to both exploit non-Gaussianity for exact DAG recovery and remain computationally amenable.


Beyond this, existing score-based approaches for non-Gaussian DAG learning \cite{hoyer2009bayesian, shimizu2014bayesian, chang2024order, chaudhuri2025consistent} also face the inevitable NP-hard problem \cite{chickering1996learning} of optimizing over the discrete, complex space of DAGs. A major advance is the NOTEARS \cite{zheng2018dags}, which reformulates acyclicity as a smooth equality constraint over real-valued matrices, enabling gradient-based optimization. However, such constrained formulations can introduce computational bottlenecks, including sensitivity to augmented Lagrangian tuning and numerical instability as the penalty coefficient grows to enforce acyclicity \cite{ng2022convergence, ng2020role}.

\paragraph{Our contributions.} We address these challenges through the following contributions.

\begin{itemize}
    \item We introduce the mean absolute residual risk, a criterion over real-valued matrices motivated by an intentionally misspecified Laplace error-SEM that naturally connects to Bayesian hierarchical modeling. We show that, under a broad class of non-Gaussian errors given by scale mixtures of Gaussians, the true causal weighted matrix asymptotically achieves strictly smaller risk than any other weighted causal matrix, whether acyclic or cyclic, almost surely.
    \item To improve robustness, finite-sample performance, and high-dimensional structure recovery, we add sparsity and soft acyclicity penalties \cite{ng2020role} to the risk. However, unlike GOLEM \cite{ng2020role}, which uses a single sparsity parameter, we assign equation-specific sparsity penalties to allow more flexible regularization across structural equations. This yields our proposed unconstrained score-minimization framework, named MARCEDES, for estimating the underlying causal DAG, avoiding the hard acyclicity constraint used in \cite{zheng2018dags}.
    \item Although the resulting optimization is unconstrained, the score objective remains non-convex and non-smooth. We address this by first obtaining a smooth surrogate \cite{mairal2014sparse, daubechies2010iteratively}, and then interpret the optimization as maximum-a-posteriori (MAP)-type estimation under a generalized Bayes framework \cite{bissiri2016general}. We then develop an alternating minimization algorithm with a principled cross-validation framework for tuning the prior hyperparameters, and similar ideas can be used more broadly for Bayesian MAP estimation \cite{bishop2006pattern}.
    \item Numerical experiments show that the proposed method improves both structure learning and parameter estimation over existing methods through various evaluation metrics.
\end{itemize} 







\section{Problem formulation}
\label{sec:prob_form}

\subsection{Structural equation model}
Consider $p$ random variables $X_j, j \in [p]$, which are generated by a linear recursive SEM given by, for every $j \in [p]$,
\begin{equation}\label{eq:sem}
X_j = \cB_j^TX + \epsilon_j \quad \text{with} \quad \epsilon_j \overset{\rm ind}{\sim} \sfP_j, 
\end{equation}
where the SEM coefficient vector $\cB_j = (\cB_{j1}, \dots, \cB_{jp})^T \in \bR^p$ quantifies all direct linear causal effects on $X_j$, and $\epsilon_j$ is an independent random noise following some unknown distribution $\sfP_j$. Equivalently, letting $X = (X_1, \dots, X_p)$, the above can be expressed as $X = \cB X + \epsilon$, where $\cB = (\cB_1, \dots, \cB_p)^T \in \bR^{p \times p}$ denotes the SEM coefficient matrix, and $\epsilon = (\epsilon_1, \dots, \epsilon_p)^T \in \bR^p$ is the random vector consisting of the independent noise variables. Moreover, we consider $n$ independent and identically distributed (iid) observations of $X$ following the model \eqref{eq:sem}, which are 
collected in the dataset $D_n = \{X^{(i)} : i \in [n]\}$.

Define a function $\gamma (\cdot) : \bR^{p \times p} \to \{0, 1\}^{p \times p}$ such that for any $B \in \bR^{p \times p}$, and for every $j, k \in [p]$, $(\gamma(B))_{jk} = 1$ if and only if $B_{jk} \neq 0$,  then 
$\gamma(\cB)$ represents the adjacency matrix of a DAG $([p], E_\cB)$, where the set of nodes $[p]$ represents the random variables and the set of edges $E_\cB$ satisfies that $(k, j) \in E_\cB$ if and only if there exists a direct linear causal effect of $X_k$ on $X_j$, i.e., $\cB_{jk} \neq 0$. Therefore, with a mild abuse of notations, we equivalently specify $\gamma(\cB)$ as the true underlying {\it causal} DAG, and in the same spirit, indicate $\cB$ also as the true {\it weighted causal} DAG. 

Due to the independence, the joint probability distribution of the noise variables is given by $\sfP = \otimes_{j \in [p]} \sfP_j$. Moreover, we denote by $\sfP_X$ the joint probability distribution $X$, induced by $\sfP$ through the model \eqref{eq:sem}.
In order to infer about $\sfP_X$, it is of significant interest to learn the true underlying causal DAG $\gamma(\cB)$, which additionally contains all conditional independence relationships encoded in $\sfP_X$ \cite{spirtes2000causation}, or even more preferably, estimate the true weighted causal DAG $\cB$ from the dataset $D_n$.

\paragraph{Linear non-Gaussian acyclic model: LiNGAM.}
However, learning the true underlying causal DAG is challenging primarily due to identifiability. Specifically, there may exist multiple equivalent linear recursive SEMs of the form \eqref{eq:sem} that induce the same data-generating distribution $\sfP_X$, leading to multiple true candidate causal DAGs. For example, if $\sfP_j$, $j \in [p]$, are all Gaussian, then any DAG that is \emph{Markov equivalent} to $\gamma(\cB)$ can be regarded as a distributionally equivalent causal DAG \cite{geiger2002parameter}. In contrast, when all error variables are non-Gaussian, the seminal work of LiNGAM \cite{shimizu2006linear} shows that there exists no other equivalent linear recursive SEM of the form \eqref{eq:sem}; equivalently, $\gamma(\cB)$ is \emph{uniquely identifiable}. We adopt this non-Gaussian error assumption throughout the present work.

\subsection{Score-based Causal DAG learning}

To learn the weighted causal DAG $\cB$, we focus on score-based approaches, which have received growing attention over the past decades. Typically, such methods first define a data-dependent \emph{score function} over the space of all $p \times p$ real matrices and then minimize it subject to the constraint that the induced adjacency matrix represents a DAG. Formally, if the score function based on $D_n$ is denoted by $S_n(\cdot): \bR^{p \times p} \to \bR$, then one considers
\begin{equation*}
    \min_{B \in \bR^{p \times p}} \; S_n(B) 
    \qquad \text{subject to} \qquad 
    \gamma(B) \; \text{is a DAG}.
\end{equation*}
However, this problem is computationally challenging due to the discrete and combinatorial nature of the acyclicity constraint, as well as the super-exponentially large space of DAGs \cite{andersson1997characterization, chickering1996learning}.

\paragraph*{Continuous relaxation.}    

As first proposed in the pioneering work \cite{zheng2018dags}, a useful way to tackle this problem is to adopt the technique of continuous relaxation, that is, replacing the discrete, combinatorial constraint by a smooth constraint, which still enforces acyclicity \cite{bello2022dagma, yu2021dags, wei2020dags, lachapelle2019gradient}. Specifically, in \cite{zheng2018dags} it has been established that the function $h(\cdot) : \bR^{p \times p} \to \bR$, defined as
$h(B) = \text{tr}(e^{B \circ B}) - p$,
satisfies that $h(B) = 0$ if and only if $\gamma(B)$ is a DAG. This consequently transforms the above problem into its equivalent form
\begin{equation}\label{eq:min_score}
    \min_{B \in \bR^{p \times p}} \; S_n(B) \qquad \text{subject to} \qquad h(B) = 0,
\end{equation}
which enjoys computationally efficient gradient-based continuous optimization techniques.

\section{Proposed method}

Although unique identifiability holds under general non-Gaussian errors, score-based DAG learning requires a specific choice of score function, which is naturally grounded in a {\it misspecified working model} for the errors \cite{hoyer2009bayesian, shimizu2014bayesian, chaudhuri2025consistent}. Therefore, an appropriate non-Gaussian distribution must be chosen to capture the non-Gaussianity present in the data while yielding a tractable framework for optimization.

\paragraph{Mean absolute residual risk.} 
Motivated by modeling the errors with Laplace distribution, or more specifically, considering 
the following misspecified {\it Laplace-error SEM}
\begin{equation}\label{eq:lap_sem}
\text{for every} \;\; j \in [p], \quad  X_j = B_j^TX + e_j \quad \text{with} \quad e_j \overset{\rm ind}{\sim} \text{Laplace}(\xi_j), \quad B \in \bR^{p \times p}, \;\; \xi \in \bR_+^p,
\end{equation}
as our working model fitted on $D_n$,
we consider the following risk function based on the mean absolute residuals when each variable is predicted upon the rest with $B$ as the coefficient matrix. Formally, we define $L_n(B)$ given by: if $\det(I - B) \neq 0$,
\begin{equation}\label{eq:L_n}
    L_n(B) = - \, \log|\det(I - B)| +  \,\sum_{j \in [p]} \log\Bigg(\frac{1}{n}\sum_{i \in [n]}|X_{j}^{(i)} - B_j^TX^{(i)}|\Bigg),
\end{equation}
and $L_n(B) = +\infty$, otherwise.
We refer it as the \textit{Mean absolute residual risk}. 
The following result illustrates the above point in detail, and further establishes its connection with hierarchical Bayesian modeling.
\begin{proposition}\label{prop:Bayes_SEM}
    Consider \eqref{eq:lap_sem} to be fitted on $D_n$ with the likelihood function denoted by $\ell_n(B, \xi)$, 
    and assume that $\xi_j \overset{\rm iid}{\sim} \pi(\xi_j) \propto 1/\xi_j$, then the maximized and marginalized likelihood are respectively
    \begin{equation*}
\max_{\xi} \; \log \ell_n(D_n | B, \xi) = c_n - n \, L_n(B), \quad \& \quad \int_\xi \ell_n(D_n | B, \xi) \prod_{j \in [p]}\pi(\xi_j) d\xi_j \propto \exp(- n \, L_n(B)),
\end{equation*}
where $c_n$ is some constant independent of $D_n$.
\end{proposition}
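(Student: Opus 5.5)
The plan is to write down the likelihood of the Laplace-error SEM \eqref{eq:lap_sem} explicitly through a change of variables, and then handle the two claims by direct computation. Take $\text{Laplace}(\xi_j)$ with density $(2\xi_j)^{-1}\exp(-|e|/\xi_j)$. If $\det(I-B)\neq 0$, the map $X \mapsto e = (I-B)X$ is a linear bijection with Jacobian $|\det(I-B)|$. Write $S_j(B) = \sum_{i\in[n]} |X^{(i)}_j - B_j^T X^{(i)}|$. Then
\begin{equation*}
\log \ell_n(D_n\mid B,\xi) = n\log|\det(I-B)| - n\sum_{j\in[p]}\log(2\xi_j) - \sum_{j\in[p]} \frac{S_j(B)}{\xi_j}.
\end{equation*}
When $\det(I-B)=0$, the model induces no density on $\bR^p$, so the likelihood is $0$. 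This matches the convention $L_n(B)=+\infty$, and from here on I assume $\det(I-B)\neq 0$.

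For the maximized likelihood, the objective separates over $j$. For each $j$, the function $\xi_j \mapsto -n\log\xi_j - S_j(B)/\xi_j$ is maximized at $\hat\xi_j = S_j(B)/n$, the mean absolute residual. Plugging this in gives
\begin{equation*}
n\log|\det(I-B)| - n\sum_j \log\bigl(S_j(B)/n\bigr) - np(\log 2 + 1) = c_n - nL_n(B),
\end{equation*}
with $c_n = -np(1+\log 2)$, which does not depend on $D_n$. The degenerate case $S_j(B)=0$ has probability zero for continuous data; there the supremum is $+\infty$, consistent with $L_n(B)=-\infty$.

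For the marginal likelihood, the integrand again factorizes over $j$, so it suffices to compute, for each $j$,
\begin{equation*}
\int_0^\infty (2\xi_j)^{-n}\exp\bigl(-S_j(B)/\xi_j\bigr)\,\xi_j^{-1}\,d\xi_j .
\end{equation*}
The substitution $u = 1/\xi_j$ turns this into $2^{-n}\int_0^\infty u^{n-1}e^{-S_j u}\,du = 2^{-n}\Gamma(n) S_j(B)^{-n}$, an inverse-gamma normalizing constant. Multiplying over $j$ and by $|\det(I-B)|^n$ gives $(2^{-n}\Gamma(n))^p\, n^{-np}\,\exp(-nL_n(B))$, after writing $S_j^{-n} = n^{-n}(S_j/n)^{-n}$. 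This is proportional to $\exp(-nL_n(B))$ in $B$, with a constant free of $B$.

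No step is a real obstacle. The only care needed is in the bookkeeping: the Jacobian factor and the handling of singular $I-B$. Because the prior $\pi$ is improper, one should also note that the integral is finite whenever $S_j(B)>0$ and $n\ge 1$.
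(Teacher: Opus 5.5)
Your proposal is correct and follows the same route as the paper's proof. Like the paper, you write the change-of-variables likelihood, profile out each $\xi_j$ at the mean absolute residual to get $c_n=-np(1+\log 2)$, and integrate against the prior $1/\xi_j$ to get $S_j(B)^{-n}$ up to constants; your explicit constant $(2^{-n}\Gamma(n))^p n^{-np}$ and your treatment of the singular case $\det(I-B)=0$ add detail the paper leaves implicit.
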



The proof can be found in Appendix \ref{app:pf_prop1}. While our problem formulation allows for general non-Gaussian error distributions, a particular class that we consider as a natural and practically relevant specialization is that of scale mixtures of Gaussian distributions. This class provides a flexible semiparametric family that can capture a wide range of non-Gaussian behaviors while retaining an interpretable latent Gaussian structure. In particular, for each node \(j \in [p]\), such a representation assumes that
\begin{align}\label{eq:smg_error}
\epsilon_j \mid \sigma_j \sim {\text N}(0, \sigma_j^2), \qquad \text{with} \qquad  \sigma_j \stackrel{\mathrm{ind}}{\sim} \mathsf{Q}_j,
\end{align}
where \(\mathsf{Q}_j\) is some unknown non-degenerate probability distribution supported on \((0,\infty)\). 
This class is widely regarded as a natural and expressive choice for modeling error distributions \cite{andrews1974scale, west1984outlier, west1987scale, box2011bayesian}. It preserves desirable structural properties such as symmetry and unimodality, and encompasses a broad range of distributions including Laplace, Student’s \(t\), Cauchy, and more generally the symmetric stable and exponential power families, as well as their mixtures, and polynomial-tailed distributions.
We emphasize that the above representation is not imposed in our general setup; rather, it serves us as a guiding and practically relevant subclass of non-Gaussian distributions, as we illustrate it in the following result.
\begin{theorem}[pairwise consistency]\label{thm:pair_const}
     Suppose that the errors are some (non-degenerate) scale mixture of Gaussian, that is, \eqref{eq:smg_error} holds, and $\sfE[|\epsilon_j|] < \infty$ for every $j \in [p]$. Fix any arbitrary $B \in \bR^{p \times p}, B \neq \cB$. 
     Then 
     we have 
    \[
    (L_n(B) - L_n(\cB)) \to \delta(B, \cB), \qquad \text{almost surely}, 
    \]
    where $\delta(B, \cB) > 0$, and is given as follows. If $\det(I-B) \neq 0$,
    \[\delta(B, \cB) = - \log |\det(W)| + \log \Big(\prod_{j \in [p]}\sfE[|W_j\epsilon|] \Big/\sfE[|\epsilon_j|] \Big) > 0, 
    \]
    with $W = (I - B)(I - \cB)^{-1}$, and $\delta(B, \cB) = +\infty$, otherwise.
\end{theorem}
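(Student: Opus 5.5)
The plan is to reduce everything to the noise vector. Under \eqref{eq:sem}, $X=(I-\cB)^{-1}\epsilon$, so the residual vector is $(I-B)X = W\epsilon$ with $W=(I-B)(I-\cB)^{-1}$. In particular $X_j^{(i)}-B_j^TX^{(i)} = W_j\epsilon^{(i)}$, and $X_j^{(i)}-\cB_j^TX^{(i)}=\epsilon_j^{(i)}$.

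\textbf{Step 1 (the limit).} If $\det(I-B)=0$, then $L_n(B)=+\infty$ for every $n$, and $L_n(\cB)$ is a.s.\ finite, so the claim is immediate. Otherwise $\det W=\det(I-B)/\det(I-\cB)$, and
\[
L_n(B)-L_n(\cB) = -\log|\det W| + \sum_{j\in[p]}\log\Big(\tfrac1n\textstyle\sum_i|W_j\epsilon^{(i)}|\Big/\tfrac1n\sum_i|\epsilon_j^{(i)}|\Big).
\]
Since $\sfE|W_j\epsilon|\le\sum_k|W_{jk}|\sfE|\epsilon_k|<\infty$, the strong law of large numbers applies to both averages. The denominators have positive limits $\sfE|\epsilon_j|>0$ by non-degeneracy. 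The numerators also have positive limits, because $W$ is invertible, so each $W_j\neq0$ and $W_j\epsilon$ is a nondegenerate scale mixture. Continuity of $\log$ then yields the stated $\delta(B,\cB)$ almost surely.

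\textbf{Step 2 ($\delta\ge0$).} Conditioning on $\sigma$ gives $W_j\epsilon\mid\sigma\sim{\rm N}(0,\sum_kW_{jk}^2\sigma_k^2)$. Hence, with $c=\sqrt{2/\pi}$,
\[
\sfE|W_j\epsilon| = c\,\sfE\|(W_{jk}\sigma_k)_k\|_2 .
\]
Let $m_k=\sfE\sigma_k$, so that $\sfE|\epsilon_k|=c\,m_k$. Jensen's inequality for the convex Euclidean norm gives
\[
\sfE\|(W_{jk}\sigma_k)_k\|_2 \ge \|(W_{jk}m_k)_k\|_2 .
\]
Set $D=\mathrm{diag}(m)$ and $M=D^{-1}WD$. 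Then
\[
\prod_j \frac{\sfE|W_j\epsilon|}{\sfE|\epsilon_j|} \ge \prod_j\|M_j\|_2 \ge |\det M| = |\det W|,
\]
where the last inequality is Hadamard's. Therefore $\delta\ge0$.

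\textbf{Step 3 (strictness, the main obstacle).} Suppose $\delta=0$; then equality must hold in Jensen's inequality for every row $j$. Equality in $\sfE\|Y\|\ge\|\sfE Y\|$ forces $Y$ to lie a.s.\ on a single ray. If row $W_j$ had two nonzero entries $k\neq l$, this would force $\sigma_k/\sigma_l$ to be a.s.\ constant. That is impossible for independent non-degenerate $\sigma_k,\sigma_l$: a constant ratio makes $\sigma_k$ a function of $\sigma_l$ while being independent of it, hence degenerate. So every row of $W$ has exactly one nonzero entry, and since $W$ is invertible, $W=P\Lambda$, a permutation times a diagonal. Concretely, $W_j=\lambda_je_{\pi(j)}^T$. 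Such $W$ do give $\delta=0$, so the distributional argument alone cannot finish and the DAG structure must be used.

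From $I-B=W(I-\cB)$, the $(j,j)$ entry gives
\[
1=\lambda_j\bigl(\mathbb{1}\{\pi(j)=j\}-\cB_{\pi(j)j}\bigr).
\]
For any $j$ with $\pi(j)\ne j$ this forces $\cB_{\pi(j)j}\ne0$, i.e.\ an edge $j\to\pi(j)$ in $\gamma(\cB)$. Following a nontrivial cycle of $\pi$ would then produce a directed cycle in the DAG $\gamma(\cB)$, which is a contradiction. Hence $\pi=\mathrm{id}$, which forces $\lambda_j=1$ for all $j$, so $W=I$ and $B=\cB$, contradicting $B\neq\cB$. Therefore $\delta(B,\cB)>0$. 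The delicate points are this Jensen-equality characterization and the observation that permutation-scaling matrices must be ruled out via acyclicity and the unit diagonal of $I-B$.
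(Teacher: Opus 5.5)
Your proof is correct, but it takes a different route from the paper in two places.

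\textbf{Proving $\delta\ge 0$ and its equality case.} The paper builds a random matrix $\Sigma$ whose rows are independent copies of $\sigma$. It shows $\sfE[\det(W\circ\Sigma)]=\det(W)\prod_j\sfE[\sigma_j]$ by expanding the determinant. It then chains $|\sfE[\det(W\circ\Sigma)]|\le\sfE[|\det(W\circ\Sigma)|]$ with Hadamard's inequality applied pointwise, and factors the expectation of the product of row norms using independence of the rows. For the equality case, it uses almost-sure orthogonality of the rows of $W\circ\Sigma$, which forces each \emph{column} of $W$ to have a single nonzero entry.

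You instead move the expectation inside each row norm by Jensen and apply Hadamard once, to the deterministic matrix $D^{-1}WD$. Your equality case comes from the equality case of Jensen for the Euclidean norm (the ray condition) together with independence and non-degeneracy of the $\sigma_k$. This forces each \emph{row} of $W$ to have a single nonzero entry. Since $W$ is invertible, both routes arrive at a generalized permutation matrix. Your version is shorter and avoids the auxiliary independent copies. The paper's version is an extension of an expected-determinant argument for a specific mixing matrix to an arbitrary $W$.

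\textbf{Going from $W=P\Delta$ to $W=I$.} The paper (Lemma 3) appeals to LiNGAM identifiability. That appeal is loose, because the competing $B$ may be cyclic. Your explicit argument closes this step cleanly: you read off the diagonal of $I-B=W(I-\cB)$ and follow a nontrivial cycle of $\pi$ to produce a directed cycle in $\gamma(\cB)$.

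\textbf{State the zero-diagonal hypothesis.} Your Step 3 uses $(I-B)_{jj}=1$, that is, $B_{jj}=0$, which is not among the theorem's hypotheses. It is genuinely needed. If $I-B=c(I-\cB)$ with $c\notin\{0,1\}$, then $L_n(B)=L_n(\cB)$ for every $n$, so $\delta(B,\cB)=0$ even though $B\neq\cB$. The paper's proof relies on the same implicit zero-diagonal convention; its algorithm forces $B_{jj}=0$. You should make this assumption explicit rather than leave it inside the computation.
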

The proof 
can be found in Appendix \ref{app:pf_thm1}.
The above result motivates using $L_n(B)$ as a score function, since it is asymptotically minimized, in a pairwise sense, at the true weighted causal DAG $\cB$ under a broad semiparametric family of non-Gaussian error distributions.

\paragraph{Proposed method: MARCEDES.}
In light of the preceding discussion, a natural approach is to use $L_n(B)$ as the score function in the score-based formulation. In fact, when the true errors belong to the scale-mixture-of-Gaussians family, the pairwise risk separation suggests that the acyclicity constraint may be relaxed asymptotically. Nevertheless, to accommodate more general non-Gaussian errors and improve finite-sample performance, we retain the DAG structure through a soft acyclicity penalty and further incorporate sparsity regularization. This leads to the following \emph{unconstrained} score-minimization problem, which we call \emph{Mean Absolute Residual risk based Continuous optimizEr for DirEcted acyclic graph Selection} (MARCEDES):
\begin{equation}\label{eq:min_score_unc}
    \min_{B \in \bR^{p \times p}}
    \; L_n(B) + \sum_{j \in [p]} \lambda_j \|B_j\|_1 + \lambda_D h(B),
\end{equation}
where $\lambda_D>0$ controls the strength of the DAG penalty, and $\lambda_j>0$, $j\in[p]$, are equation-specific sparsity parameters.
The use of $h(B)$ as a penalty rather than as a hard equality constraint corresponds to the \emph{soft} DAG constraint studied in \cite{ng2020role}. In the context of GOLEM \cite{ng2020role}, such a soft treatment of acyclicity, combined with sparsity regularization, was shown to offer computational advantages over the hard DAG constraint of \cite{zheng2018dags} and to improve empirical performance in several settings, including high-dimensional regimes. Our formulation differs from \cite{ng2020role} in two important ways. First, whereas GOLEM is based on a Gaussian likelihood and is therefore primarily tailored to Gaussian DAG models, MARCEDES is built on the mean absolute residual risk, which is designed to exploit non-Gaussianity for exact DAG identification. Second, instead of using a single global sparsity parameter $\lambda\|B\|_1$, we allow equation-specific penalties $\sum_{j\in[p]}\lambda_j\|B_j\|_1$, providing greater flexibility across structural equations. This adaptive regularization, inspired by related ideas in \cite{zou2006adaptive, meier2008group}, is particularly useful in high-dimensional settings and leads to improved estimation accuracy in numerical experiments.




\section{Optimization}
\label{sec:optim}

In this section, we develop an algorithm for solving the optimization problem of MARCEDES formulated in \eqref{eq:min_score_unc}. The objective is highly nonconvex and, even without the sparsity and DAG penalties, remains nonsmooth due to the absolute residuals in $L_n(\cdot)$. In addition, the method requires tuning several penalty parameters.

\subsection{Generalized Bayes framework}

We have $p$ equation-specific sparsity parameters $\lambda_j$, $j \in [p]$, collected as $\lambda=(\lambda_1,\ldots,\lambda_p)$, which must be properly tuned for accurate estimation. When $p$ is large, direct tuning over a $p$-dimensional grid becomes computationally prohibitive. To address this issue, we adopt a generalized Bayes approach \cite{bissiri2016general}. Specifically, motivated by Proposition \ref{prop:Bayes_SEM}, we consider the hierarchical Bayesian formulation therein and place a hyperprior on the sparsity parameters. Let $\pi_\theta(\cdot)$ be a prior distribution supported on $\bR_+$ and indexed by a lower-dimensional hyperparameter $\theta \in \Omega \subseteq \bR^d$, where typically $d \ll p$. We consider the prior structure
\begin{align}
\begin{split}
    B \mid \lambda_j, \, j \in [p]
    &\sim \pi(B \mid \lambda, \lambda_D)
    \propto
    \exp\Bigg(
        -\sum_{j \in [p]} \lambda_j \|B_j\|_1
        - \lambda_D h(B)
    \Bigg),\\
    \lambda_j
    &\overset{\rm iid}{\sim} \pi_\theta(\lambda_j),
    \qquad j \in [p].
\end{split}
\end{align}
Here, the prior on $B$ is understood as a generalized prior, where the row-wise $\ell_1$ term encourages sparsity across structural equations and $h(B)$, as in \eqref{eq:min_score}, softly penalizes deviations from acyclicity.
Under this formulation, the maximum-a-posteriori estimator, after the necessary rescaling and transformation, is obtained by solving
\begin{equation}\label{eq:MAP}
    \min_{B \in \bR^{p \times p}, \, \lambda\in\bR_+^p}
    \; L_n(B)
    + \sum_{j \in [p]} \lambda_j \|B_j\|_1
    + \lambda_D h(B)
    - \sum_{j \in [p]} \log \pi_\theta(\lambda_j).
\end{equation}
This formulation reduces the burden of selecting $p$ sparsity parameters separately by modeling them through a lower-dimensional hyperparameter $\theta$. Let the minimizer with respect to $B$ in \eqref{eq:MAP} be denoted by $\hat{B}(\theta,\lambda_D)$. In the remainder of this section, we develop a computational scheme for solving \eqref{eq:MAP}.




\subsection{Gradual enforcement of the DAGness penalty}\label{subsec:incr_DAG_pen}

The parameter $\lambda_D$ controls the strength of the DAGness penalty. Since the objective is highly nonconvex and may contain many closely spaced local optima, using a large value of $\lambda_D$ from the beginning can overly restrict the search to a neighborhood of the DAG chosen as the initialization point in our algorithm. To encourage broader exploration, including cyclic directed graphs and their nearby DAGs, we gradually increase the DAG penalty during optimization.

Specifically, let $\Lambda_D$ be an increasing grid of DAG-penalty values whose largest element is the target value $\lambda_D$. Denote its length by $T=|\Lambda_D|$, and write its elements as
$\lambda_{D(1)} < \cdots < \lambda_{D(T)}=\lambda_D$.
At the first stage, $t=1$, we compute $\hat{B}(\theta,\lambda_{D(1)})$ using the optimization scheme described below in Section \ref{subsec:comp_alg}, initialized at a chosen matrix $B^{(0)}$. For each subsequent stage $t=2,\ldots,T$, we compute $\hat{B}(\theta,\lambda_{D(t)})$ using the same scheme, initialized at the previous solution, $B^{(0)} = \hat{B}(\theta,\lambda_{D(t-1)})$.
After completing all stages, the final output is
    $\hat{B}(\theta,\lambda_D)
    =
    \hat{B}(\theta,\lambda_{D(T)}).$

\subsection{Core optimization algorithm}\label{subsec:comp_alg}

From \eqref{eq:L_n}, the risk $L_n(B)$ is nonsmooth because its second term involves absolute residuals, which prevents the direct application of standard gradient-based methods. To address this issue, we use a variational formulation of the $\ell_1$ norm, commonly used in iterative reweighting schemes for sparse optimization \cite{daubechies2010iteratively}, and discussed in \cite[Section~5.4]{mairal2014sparse}. Specifically, we use the identity $2|x|=\min_{\eta\in\bR_+}(x^2/\eta+\eta)$, with minimizer $\eta=|x|$. Thus, letting $\eta=(\eta_{ij}: i\in[n],\,j\in[p])\in\bR_+^{np}$ and defining
\[
\cL_n(B,\eta)
:=
-\log|\det(I-B)|
+
\sum_{j=1}^p
\log\Bigg(
\frac{1}{2n}
\sum_{i=1}^n
\Bigg(
\frac{(X^{(i)}_j-B_j^T X^{(i)})^2}{\eta_{ij}}
+
\eta_{ij}
\Bigg)
\Bigg),
\]
we have $L_n(B)=\min_{\eta\in\bR_+^{np}}\cL_n(B,\eta)$.
Therefore, for a given DAG-penalty value $\lambda_{D(t)}\in\Lambda_D$, the minimization problem in \eqref{eq:MAP} can be equivalently written as
\begin{equation}\label{eq:eta_MAP}
    \min_{B \in \bR^{p \times p},\, \lambda\in\bR_+^p,\, \eta \in \bR_+^{np}}
    \;
    \cL_n(B,\eta)
    + \lambda_{D(t)} h(B)
    + \sum_{j \in [p]} \lambda_j \|B_j\|_1
    - \sum_{j \in [p]} \log \pi_\theta(\lambda_j).
\end{equation}
We solve \eqref{eq:eta_MAP} by alternating minimization over $\eta$, $\lambda$, and $B$ until convergence or until a prescribed maximum number of iterations is reached. The algorithm is described formally below.

    \paragraph{Initialization.}
If $t=1$, we initialize the algorithm at an initial point $B^{(0)}$, such as the empty DAG, i.e., $B^{(0)}=\boldsymbol{0}$, or another reliable estimate when available. We note that using an informative initialization can substantially improve performance relative to the initial estimator itself, provided such an estimate is readily obtainable. Further details on the initialization choices used in our experiments are provided in Appendix~\ref{app:supp_numexp}.
If $t > 1$, set
$B^{(0)}=\hat{B}(\theta,\lambda_{D(t-1)})$.

Then, at each iteration $k\in\bN$, perform the following steps.

\paragraph{Minimization with respect to $\eta$.}
Using the minimization identity above, update
\[
    \eta_{ij}^{(k)}
    =
    \left|X_{j}^{(i)} - B_{j}^{(k-1)T}X^{(i)}\right|,
    \qquad
    i\in[n],\; j\in[p].
\]

\paragraph{Minimization with respect to $\lambda$.}
Next, for every $j\in[p]$, update
\[
    \lambda_j^{(k)}
    =
    \argmin_{\lambda_j\in\bR_+}
         \; -\log \pi_\theta(\lambda_j)
        +
        \lambda_j \|B_j^{(k-1)}\|_1.
\]

    \paragraph*{Minimization with respect to $B$.}
This step is more involved and considers the following minimization with the
previously updated values of $\eta$ and $\lambda$:
\[
\argmin_{B \in \bR^{p \times p}} \;
\cL_n(B,\eta^{(k)})+\lambda_{D(t)}h(B)
+\sum_{j\in[p]}\lambda_j^{(k)}\|B_j\|_1.
\]
Since both $\cL_n(B,\eta^{(k)})$ and $h(B)$ are smooth, we first take an
Adam \cite{kingma2014adam} step on the smooth component
and then apply a row-wise ISTA \cite{beck2009fast} shrinkage step induced by the $\ell_1$
penalty, and iterate over $M_{\rm in}$ many inner iterations. Formally, let the smooth component be
\[
G(B;\eta^{(k)}) = \cL_n(B,\eta^{(k)})+\lambda_{D(t)}h(B),
 \qquad \text{and} \qquad g^{(s)}=\nabla_B G(\widetilde{B}^{(s)};\eta^{(k)})
\]
be the gradient computed on a minibatch at inner iteration $s \in [M_{\rm in}]$, where the initialization $\widetilde{B}^{(1)} = B^{(k-1)}$. Then, for some pre-specified $\beta_1, \beta_2 \in (0, 1)$, and the moment parameters $m^{(0)}, v^{(0)}$, the Adam
moments are updated as
\[
m^{(s)}=\beta_1m^{(s-1)}+(1-\beta_1)g^{(s)},\qquad
v^{(s)}=\beta_2v^{(s-1)}+(1-\beta_2)(g^{(s)}\circ g^{(s)}),
\]
followed with bias-corrections $\widehat m^{(s)}={m^{(s)}}/({1-\beta_1^s})$ \, and \, $
\widehat v^{(s)}={v^{(s)}}/({1-\beta_2^s})$.
Subsequently, the Adam descent step is, define $\widehat B^{(s)}$ such that for every $j, \ell \in [p]$, with some learning rate $\ell_{\rm Adam} > 0$,
\[
\widehat B^{(s)}_{j\ell}
=
\widetilde B^{(s)}_{j\ell} -\ell_{\rm Adam}
\frac{\widehat m^{(s)}_{j\ell}}{\sqrt{\widehat v^{(s)}_{j\ell}}+\varepsilon_{\rm Adam}}.
\]
Then, we apply row-specific soft-thresholding operator on $\widehat B^{(s)}$ to obtain $\widetilde B^{(s+1)}$, i.e.,
\[
\widetilde B^{(s+1)}_{j\ell}
=
\operatorname{sign}(\widehat B^{(s)}_{j\ell})
\left(|\widehat B^{(s)}_{j\ell}|-\ell_{\rm ISTA}\lambda_j^{(k)}\right)_+,
\qquad j, \ell\in[p].
\]
Finally, we set the diagonal entries exactly to zero, i.e., for every $j\in[p]$,
$\widetilde B^{(s+1)}_{jj}=0$.
After $M_{\rm in}$ such inner updates, we set
$B^{(k)}=\widetilde B^{(M_{\rm in})}$.
Typically, $M_{\rm in}$ is chosen to be small, as it is unnecessary to solve this subproblem to high accuracy given that $\lambda^{(k)}$ and $\eta^{(k)}$ are only intermediate estimates.  
    
    \paragraph{Stopping criterion.} We stop if $||B^{(k)} - B^{(k-1)}||_F \leq \epsilon_B$, 
    or we reach $k = M$. Otherwise, we iterate by restarting the minimization step with respect to $\eta$ with the estimate $B^{(k)}$ and $k \gets k+1$.

\subsection{Cross-validation over the hyper-parameter}

To further improve estimation accuracy and reduce uncertainty in the choice of the prior hyperparameter $\theta$, we use $K$-fold cross-validation to select $\theta$ from a finite candidate set.

Let $I_v$, $v\in[K]$, be $K$ equally sized disjoint partitions of $[n]$, and let $\Theta\subseteq\Omega$ be a finite set of candidate values for $\theta$. For each $\theta\in\Theta$, let $\hat{B}^{[-v]}(\theta,\lambda_D)$ denote the minimizer with respect to $B$ obtained from the training samples $\{X^{(i)}: i\notin I_v\}$, namely,
\begin{equation}\label{eq:MAP_cross}
    \min_{B \in \bR^{p \times p}, \, \lambda\in\bR_+^p}
    \;
    L_n(B; I_v^c)
    + \sum_{j \in [p]} \lambda_j \|B_j\|_1
    + \lambda_D h(B)
    - \sum_{j \in [p]} \log \pi_\theta (\lambda_j),
\end{equation}
where, for any $s\subseteq[n]$, the empirical risk $L_n(B;s)$ is defined using the samples $\{X^{(i)}:i\in s\}$ as
\[
L_{n}(B; s)
=
-\log|\det(I - B)|
+
\sum_{j = 1}^p
\log\left(
\frac{1}{|s|}
\sum_{i \in s}
\left|X^{(i)}_{j} - B_j^T X^{(i)}\right|
\right).
\]
The cross-validated risk is then defined as
\[
    R(\theta)
    =
    \frac{1}{K}
    \sum_{v=1}^K
    L_n\!\left(\hat{B}^{[-v]}(\theta,\lambda_D); I_v\right).
\]
Finally, we select
$
    \hat{\theta}_{\rm CV}
    =
    \argmin_{\theta\in\Theta} R(\theta),
$
and compute the final estimator $\hat{B}(\hat{\theta}_{\rm CV},\lambda_D)$ using the optimization scheme described in Sections \ref{subsec:incr_DAG_pen} and \ref{subsec:comp_alg}.

\subsection{Empirical Bayes thresholding}



The estimate $\hat{B}(\theta,\lambda_D)$, for a fixed $\theta$, or its cross-validated version $\hat{B}(\hat{\theta}_{\rm CV},\lambda_D)$ may still contain cycles, primarily because some nonzero entries may be shrunk close to zero while effectively representing absent edges. Therefore, a final refinement step is needed to set small spurious coefficients exactly to zero and obtain a transparent estimate of the causal DAG.

Since the diagonal entries of $\hat{B}(\hat{\theta}_{\rm CV},\lambda_D)$ are already constrained to be zero by the algorithm, thresholding is applied only to the off-diagonal entries. Specifically, we perturb each off-diagonal entry by adding a small independent Gaussian noise variable from $\mathrm{N}(0,\sigma^2)$, for example with $\sigma=0.1$. We then apply empirical Bayes thresholding \cite{johnstone2004needles} to these noisy off-diagonal entries to estimate a threshold, below which coefficients are truncated to zero. This procedure is repeated for a sufficient number of replications, say 100, and the average estimated threshold is applied to $\hat{B}(\hat{\theta}_{\rm CV},\lambda_D)$.
If the resulting matrix still contains cycles, which is unlikely due to the DAG penalty and non-Gaussianity, we iteratively remove the edge with the smallest absolute weight until the estimated graph becomes acyclic, as is commonly done in continuous optimization approaches \cite{zheng2018dags, ng2020role}.

\section{Numerical Experiments}

In this section, we evaluate the empirical performance of the proposed method and compare it with existing benchmarks for learning linear DAGs with or without continuous optimization. Specifically, we consider PC \cite{spirtes2000causation}, GOLEM \cite{ng2020role}, ICA-LiNGAM \cite{shimizu2006linear}, DirectLiNGAM \cite{shimizu2011directlingam} and TL \cite{zhao2022learning} as our benchmarks.  Our primary focus is on structure recovery, while also assessing parameter estimation accuracy. 

\paragraph{Simulation setup.}

We consider data to be generated from the linear SEM, formalized as $X = \cB X + \epsilon$ according to \eqref{eq:sem},
where the underlying causal graph $\gamma(\cB)$ is generated using 
the standard graph model of
\emph{Erd\H{o}s--R\'enyi} (ER) \cite{erdos1959random}. 
Specifically, in this study, we consider ER$-2$ 
as our ground truth causal DAG, where by definition ER$-k$ graphs have $kp$ many expected edges, for any $k \in \bN$. Furthermore, for the generated causal graph, the edge weights are sampled from a Uniform distribution, ensuring various signal strength over the weighted causal graph $\cB$. 
Moreover, to incorporate non-Gaussianity, we consider three noise distributions: Uniform, Laplace, and Student's \(t\), where the errors are generated such that each distribution appears almost in equal proportion in the data generating process. Finally, we vary the sample size and dimensionality across $n \in \{50, 200\}$, and $p \in \{10, 20, 50, 100\}$, respectively, covering both low- and high-dimensional regimes, as well as small- and large-sample settings. More details about the simulation setup can be found in Appendix \ref{app:supp_numexp}.

\paragraph{Evaluation metrics.}
We assess performance using standard metrics that capture both structural and parametric accuracy. For structure learning, we consider the \emph{True Positive Rate} (TPR) and \emph{False Discovery Rate} (FDR) to quantify edge recovery, along with the normalized \emph{Structural Hamming Distance} (SHD) to measure overall graph discrepancy. We also report 
the \emph{Matthews Correlation Coefficient} (MCC), an overall summary metric suitable for evaluating graph learning.

Overall, smaller values of SHD, FDR, together with larger values of TPR and MCC, indicate better performance.

\subsection{Structure recovery}

\begin{figure}[t]
    \centering

    \begin{subfigure}{0.48\textwidth}
        \centering
        \includegraphics[width=\linewidth]{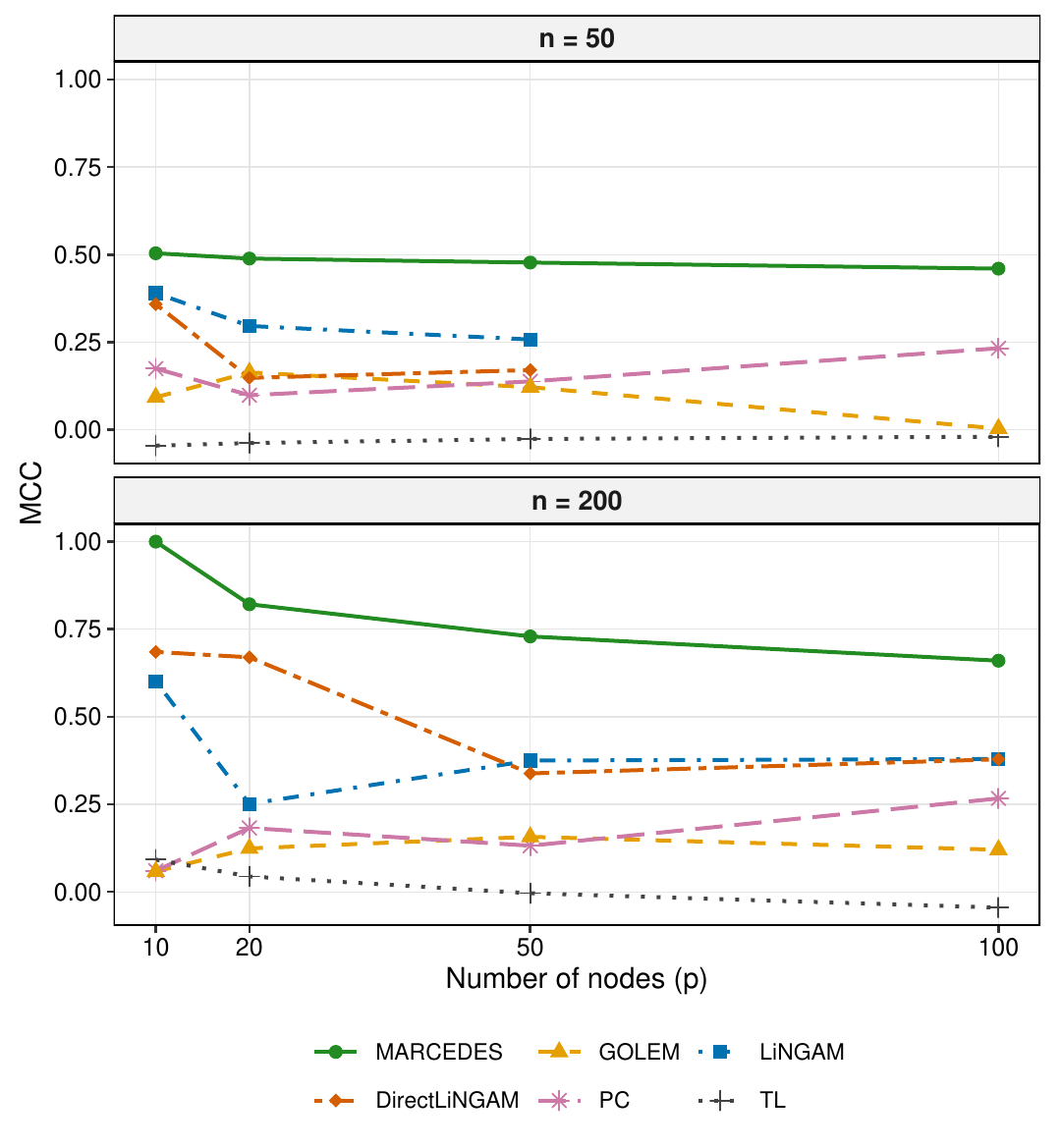}
        \caption{MCC}
        \label{fig:plotMCC}
    \end{subfigure}
    \hfill
    \begin{subfigure}{0.48\textwidth}
        \centering
        \includegraphics[width=\linewidth]{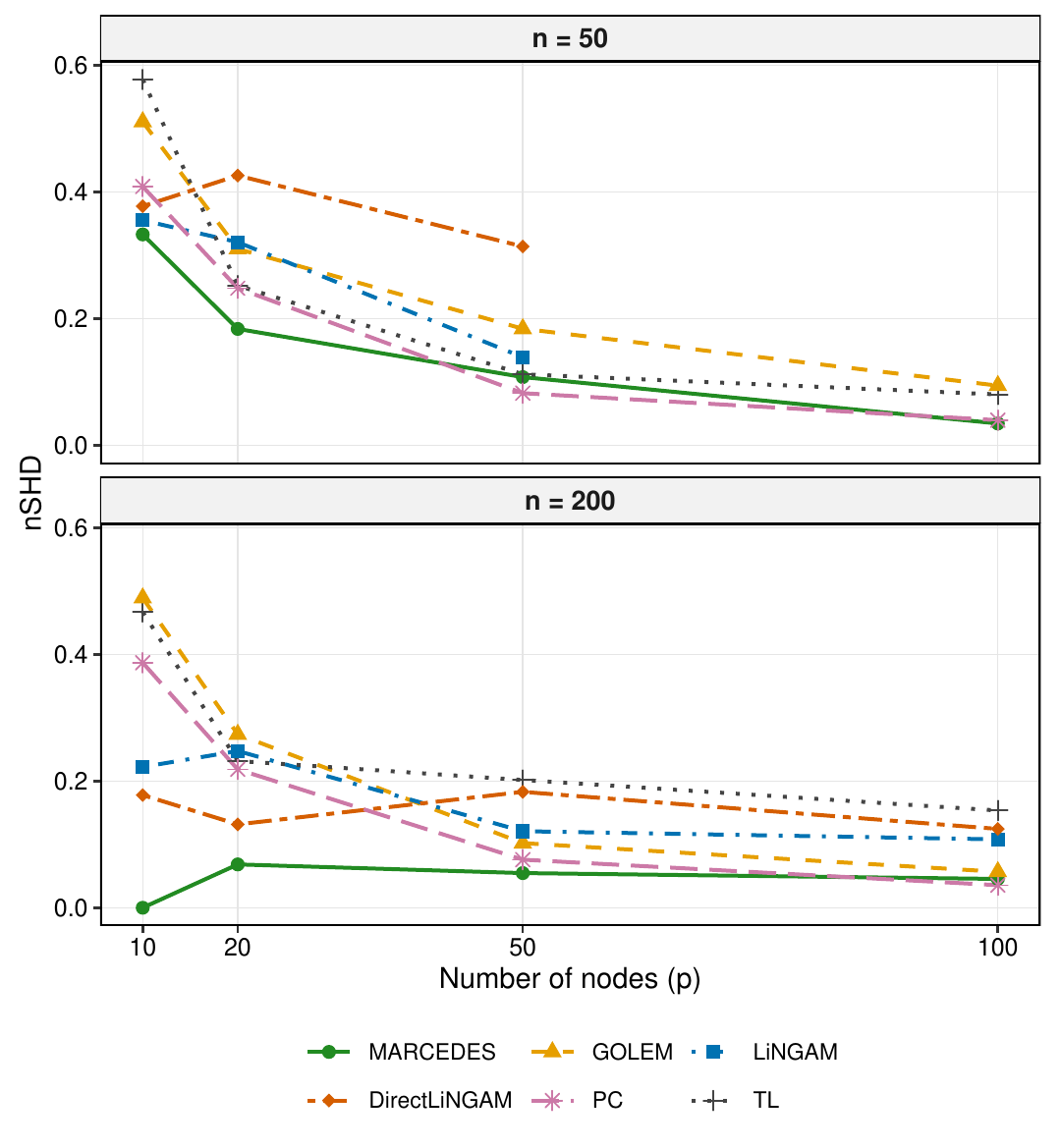}
        \caption{Normalized SHD}
        \label{fig:plotnSHD}
    \end{subfigure}

    \caption{Performance comparison of different in terms of overall structure recovery.}
    \label{fig:plot_struc}
\end{figure}

 We apply MARCEDES, together with several benchmark methods, to compare their performance in recovering the true underlying DAG. Figure~\ref{fig:plot_struc} reports the MCC and FDR values. Overall, MARCEDES yields a clear improvement in structure learning over the competing methods. As expected, GOLEM and PC, which are primarily tailored to Gaussian settings and do not explicitly exploit non-Gaussianity, achieve lower MCC values and higher FDR values compared with the non-Gaussian methods. Among the methods designed for non-Gaussian causal discovery, MARCEDES also demonstrates superior structure recovery.

In the small-sample setting, with $n=50$, MARCEDES shows substantial improvement in terms of both MCC and normalized SHD. For the larger-sample setting, with $n=200$, the performance improves further relative to the competing methods. In particular, when the sample size is large and the dimension is moderate, MARCEDES attains very small SHD values, indicating near-exact recovery of the underlying graph. 

It is important to note that, in higher-dimensional settings with $p>n$, existing LiNGAM-based methods sometimes fail to produce an estimate; consequently, some of their curves are incomplete in the figures. In contrast, MARCEDES remains applicable and is able to produce estimates in these regimes.

\subsection{Edge recovery}

\begin{figure}[t]
    \centering

    \begin{subfigure}{0.48\textwidth}
        \centering
        \includegraphics[width=\linewidth]{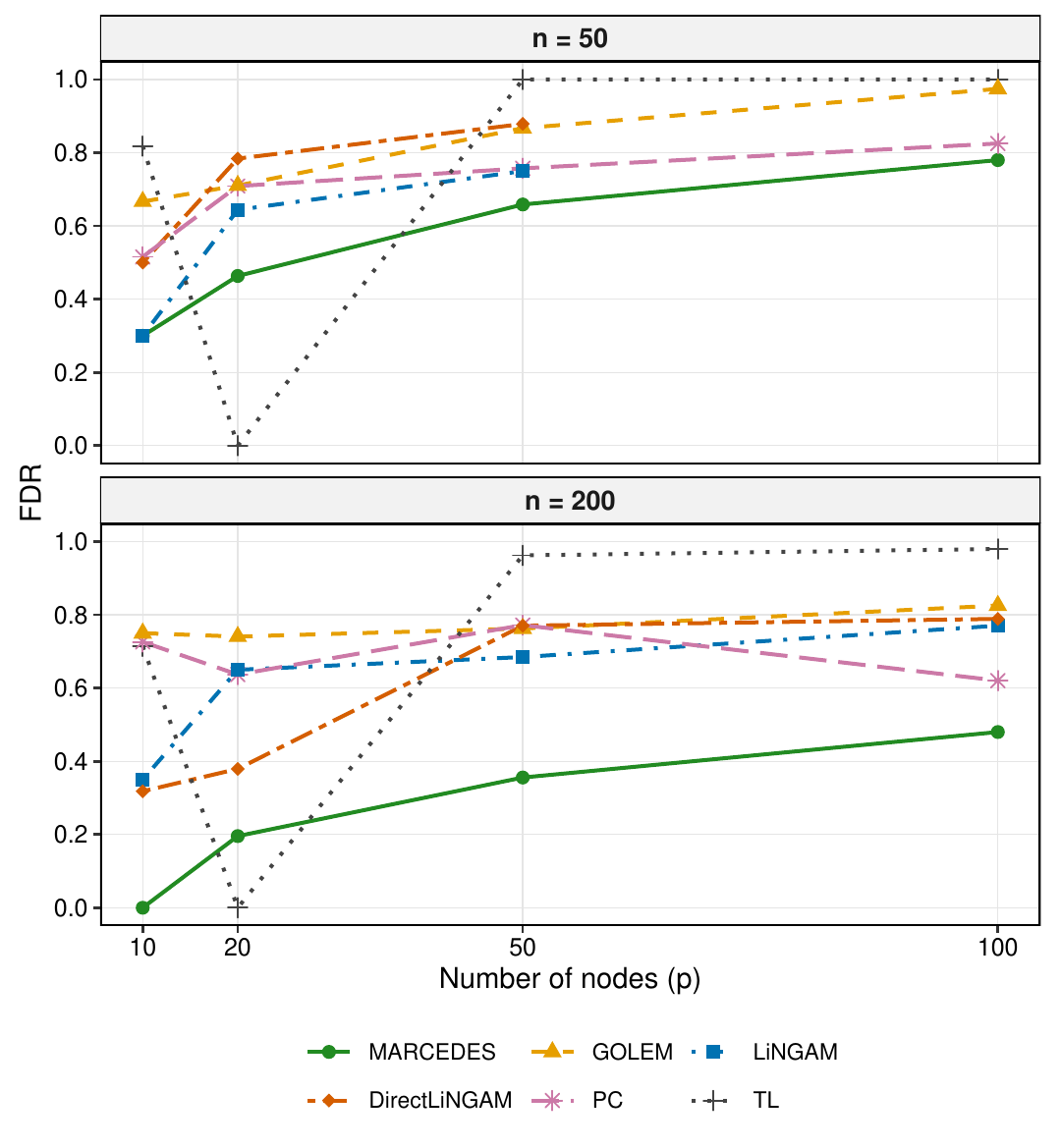}
        \caption{FDR}
        \label{fig:plotFDR}
    \end{subfigure}
    \hfill
    \begin{subfigure}{0.48\textwidth}
        \centering
        \includegraphics[width=\linewidth]{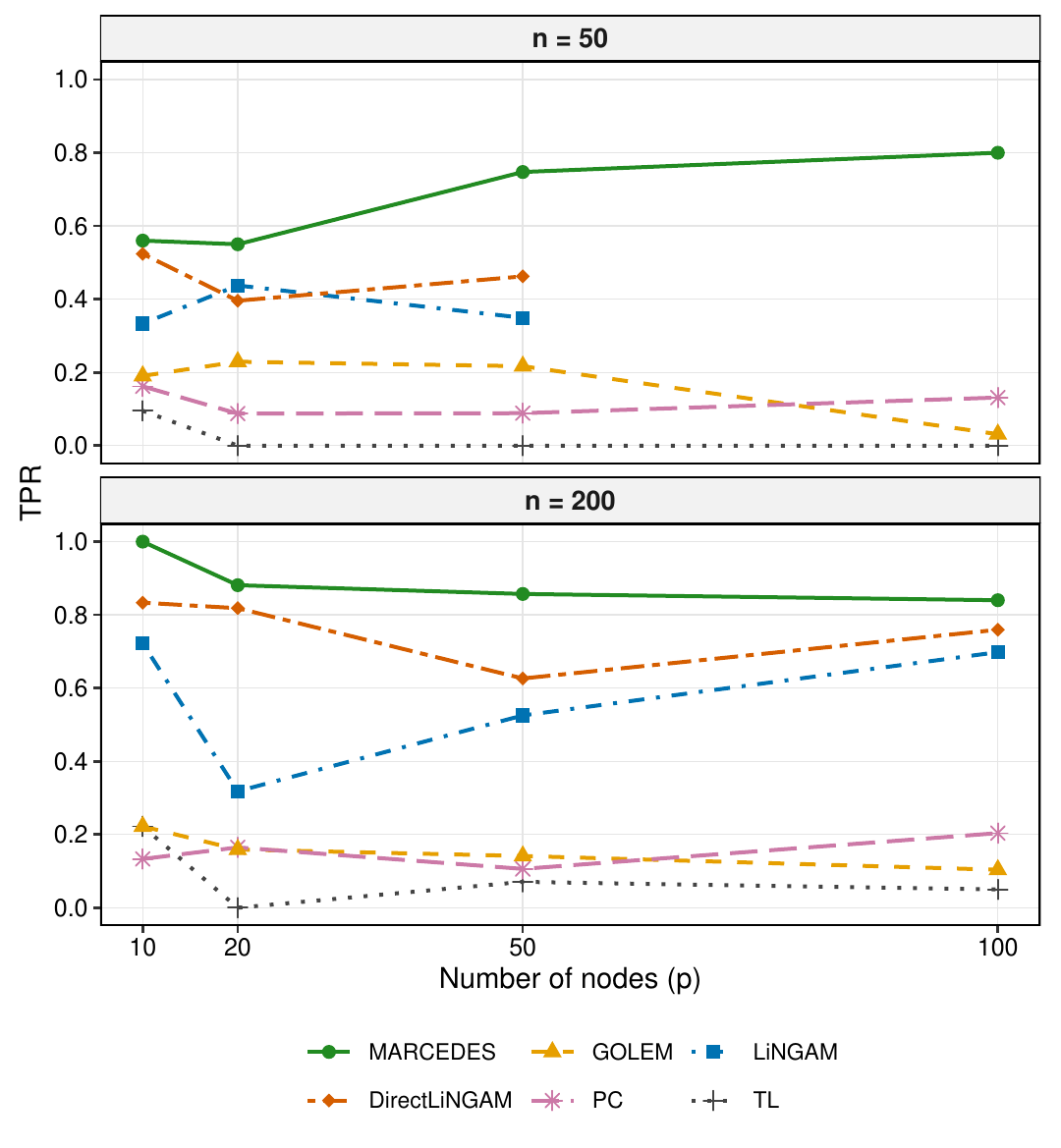}
        \caption{TPR}
        \label{fig:plotnTPR}
    \end{subfigure}

    \caption{Performance comparison of different in terms of edge recovery.}
    \label{fig:edge_recov}
\end{figure}

To assess edge-recovery performance at a more granular level, we consider FDR and TPR; see Figure~\ref{fig:edge_recov}. These metrics allow us to separately evaluate the extent to which a method avoids spurious edge selections and successfully identifies true causal edges. Across both small- and large-sample settings, the proposed method consistently outperforms the existing benchmarks. In particular, MARCEDES maintains a consistently high TPR across increasing dimensions, indicating stable recovery of true edges, while also achieving favorable FDR performance. This suggests that the proposed method provides more reliable edge recovery in both low- and high-dimensional regimes.

\section{Discussion}

We study causal DAG learning for linear SEMs with non-Gaussian errors and propose MARCEDES, a continuous score-based method built on the mean absolute residual risk. By combining row-specific sparsity penalties with a soft DAG constraint, MARCEDES leads to an unconstrained optimization framework that avoids the computational difficulties of enforcing hard acyclicity constraints. We further develop a gradient-based computational procedure to handle the non-smoothness of the objective, and our simulations show that MARCEDES can substantially improve structure learning performance over existing methods.

Several important directions remain for future work. While the present method is motivated by the mean absolute residual risk, it would be valuable to identify alternative, more general risk functions that enjoys favorable properties under broader classes of non-Gaussian errors and may further improve DAG learning performance. Another important direction is to establish formal statistical guarantees for the proposed estimator, including DAG selection consistency and parameter estimation consistency.

\paragraph{Broader impact.} The proposed method also has potential broader impact, since causal structure learning is useful in domains such as biology, economics, finance, healthcare, and the social sciences, where understanding directional relationships among variables is important. At the same time, estimated DAGs should be interpreted with caution. As with other causal discovery methods, MARCEDES may produce spurious or missing edges in finite samples, and its output may be affected by latent confounding, measurement error, selection bias, or violations of the underlying assumptions. Therefore, in decision-critical applications, learned structures should be validated by domain experts and supported by additional evidence before being used for consequential decisions.

\begin{ack}
A. Chaudhuri and Y. Ni were supported by NIH R01 GM148974.
Y. Ni was additionally supported by NSF DMS-2112943.
A. Bhattacharya was partially supported by NSF DMS-2210689 and NSF DMS-1916371.
The authors declare no competing interests.
\end{ack}


\newpage

{\small
\bibliographystyle{plainnat}
\bibliography{BCDnG-3}
}


\appendix

\section{Theoretical results}


\subsection{Proof of Proposition \ref{prop:Bayes_SEM}}\label{app:pf_prop1}
Following \eqref{eq:lap_sem}, we have $X = BX + e$, or equivalently, $X = (I - B)^{-1}e$. Furthermore, we have $e_j/\xi_j \overset{\rm iid}{\sim} \text{Laplace(1)}$, leading to the joint distribution of $X$ to be given by
\[
f_X(x) = |\det(I - B)| \prod_{j \in [p]} \frac{1}{2\xi_j}\exp\lt(-\frac{1}{\xi_j}|X_j - B_j^TX|\rt).
\]
Thus, the log-likelihood function is
\[
\log \ell_n(B, \xi) = -np \log 2 + n\log|\det(I - B)| - n \sum_{j \in [p]} \log \xi_j - \sum_{j \in [p]}\frac{1}{\xi_j}\sum_{i \in [n]} |X_j^{(i)} - B_j^TX^{(i)}|.
\]
Therefore, maximizing the above with respect to $\xi$, and letting $\hat{\xi} = \argmax_\xi \log \ell_n(B, \xi)$, we have
\[
\hat{\xi}_j = \frac{1}{n}\sum_{i \in [n]} |X_j^{(i)} - B_j^TX^{(i)}|, 
\]
which yields $\max_\xi \log \ell_n(B, \xi) = -np(1 + \log 2) - n L_n(B)$, proving the first part.

Furthermore, we have
\begin{align*}
    &\int_\xi \ell_n(B, \xi) \pi(\xi)d\xi\\
    &\propto\;\; 2^{-np} |\det(I - B)|^n \int_\xi \prod_{j \in [p]} \xi_j^{-(n+1)}\exp\Bigg(-\frac{1}{\xi_j}\sum_{i \in [n]}|X_j^{(i)} - B_j^TX^{(i)}|\Bigg)d\xi_j\\
    &\propto |\det(I - B)|^n \prod_{j \in [p]} \Bigg(\sum_{i \in [n]}|X_j^{(i)} - B_j^TX^{(i)}|\Bigg)^{-n} \propto \;\; \exp(-n \, L_n(B)).
\end{align*}
The proof is complete.

\subsection{Proof of Theorem \ref{thm:pair_const}}\label{app:pf_thm1}
Fix any arbitrary $B \in \bR^{p \times p}$. Then following \eqref{eq:sem} and \eqref{eq:lap_sem}, we have 
\[
e = (I - B)X = (I - B)(I - \cB)^{-1}\epsilon = W\epsilon, \quad \text{where} \;\; W = (I - B)(I - \cB)^{-1}.
\]
Thus, letting $W = (W_1, \dots, W_j)^T$, we have $e_j = W_j^T \epsilon$ for every $j \in [p]$.

Before proving Theorem \ref{thm:pair_const}, we first establish some important lemmas that will be useful later. Reiterating the assumption \eqref{eq:smg_error}, we have
\[\epsilon_j \mid \sigma_j \sim {\text N}(0, \sigma_j^2), \qquad \text{with} \qquad  \sigma_j \stackrel{\mathrm{ind}}{\sim} \mathsf{Q}_j,
\]
Let $\sigma = (\sigma_1. \dots, \sigma_p)^T$, and we define a random matrix $\Sigma$ whose rows are the transpose of $p$ independent random vectors $\sigma^{(i)} = (\sigma^{(i)}_1, \dots, \sigma^{(i)}_p)^T$, $i \in [p]$, which are identically distributed to $\sigma$, i.e., $\Sigma_{ij} = \sigma^{(i)}_j$. Then, 
We first establish the following results extending the result of \cite{chaudhuri2025consistent} to an arbitrary mixing matrix $W$ by adapting the arguments therein.
\begin{lemma}\label{lem:E_hadamard}
We have
    \[
    \sfE[\det(W \circ \Sigma)] = \det(W)\prod_{j \in [p]} \sfE[\sigma_j]
    \]
\end{lemma}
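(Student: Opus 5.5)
Since $\det(W \circ \Sigma)$ is a polynomial in the entries $\Sigma_{ij}$, we expand it by the Leibniz formula and compute the expectation term by term:
\[
\det(W \circ \Sigma) = \sum_{\pi \in S_p} \operatorname{sgn}(\pi) \prod_{i \in [p]} W_{i\pi(i)} \Sigma_{i\pi(i)} = \sum_{\pi \in S_p} \operatorname{sgn}(\pi) \Bigg(\prod_{i \in [p]} W_{i\pi(i)}\Bigg) \prod_{i \in [p]} \sigma^{(i)}_{\pi(i)}.
\]
The argument then rests on computing $\sfE\big[\prod_{i} \sigma^{(i)}_{\pi(i)}\big]$ for each fixed permutation $\pi$, and showing that it does not depend on $\pi$.

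For fixed $\pi$, the factors $\sigma^{(1)}_{\pi(1)}, \dots, \sigma^{(p)}_{\pi(p)}$ come from distinct rows of $\Sigma$. Distinct rows are independent copies $\sigma^{(i)}$ of $\sigma$, so these $p$ random variables are mutually independent. This holds even though entries within a single row would only need to be independent, which is also assumed. Since $\sigma^{(i)} \overset{d}{=} \sigma$, the marginal law of $\sigma^{(i)}_{\pi(i)}$ is $\mathsf{Q}_{\pi(i)}$. Hence
\[
\sfE\Bigg[\prod_{i} \sigma^{(i)}_{\pi(i)}\Bigg] = \prod_{i} \sfE[\sigma_{\pi(i)}] = \prod_{j \in [p]} \sfE[\sigma_j],
\]
where the last equality holds because $\pi$ is a bijection of $[p]$. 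Summing over $\pi$ and applying Leibniz again to $W$ gives $\sfE[\det(W\circ\Sigma)] = \det(W)\prod_j \sfE[\sigma_j]$.

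The only delicate point is integrability: linearity of expectation over the finite sum, and the product formula for independent variables, both require $\sfE[\sigma_j] < \infty$. I would obtain this from the hypothesis $\sfE|\epsilon_j| < \infty$ of Theorem \ref{thm:pair_const}. Under \eqref{eq:smg_error},
\[
\sfE|\epsilon_j| = \sfE\big[\sfE[|\epsilon_j| \mid \sigma_j]\big] = \sqrt{2/\pi}\,\sfE[\sigma_j],
\]
so $\sfE[\sigma_j]$ is finite. The $\sigma_j$ are positive, so the products are nonnegative and Tonelli's theorem justifies the factorization with no further conditions. This is the step I expect to require care. Once integrability is in place, the combinatorial identity above finishes the proof.
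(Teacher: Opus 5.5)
Your proof is correct and follows the paper's argument essentially verbatim: expand $\det(W\circ\Sigma)$ by the Leibniz formula, factor $\sfE\bigl[\prod_i \sigma^{(i)}_{\tau(i)}\bigr]$ using the independence of the distinct rows, and reindex via the bijection $\tau$ to obtain $\det(W)\prod_j \sfE[\sigma_j]$. Your integrability remark, which derives $\sfE[\sigma_j]<\infty$ from $\sfE|\epsilon_j| = \sqrt{2/\pi}\,\sfE[\sigma_j]$, is a useful detail that the paper leaves implicit.
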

\begin{proof}
    Since \( W \circ \Sigma = \big((w_{ij}\sigma^{(i)}_j)\big) \), the determinant expansion gives us
\begin{align*}
\det(W \circ \Sigma)
= \sum_{\tau \in \cT_p} {\rm sgnt}(\tau)\; \prod_{i \in [p]} w_{i\tau(i)} \sigma^{(i)}_{\tau(i)},
\end{align*}
where $\cT_p$ denotes the set of all permutations $\tau(\cdot)$ of $[p]$.
Taking expectation and using independence,
\begin{align*}
\sfE[\det(W \circ \Sigma)]
&= \sum_{\tau \in \cT_p} {\rm sgnt}(\tau)\; \prod_{i \in [p]} w_{i\tau(i)} 
\prod_{i \in [p]} \sfE[\sigma_{\tau(i)}] \\
&= \Big(\prod_{i \in [p]} \sfE[\sigma_i]\Big)
\sum_{\tau \in \cT_p} {\rm sgnt}(\tau)\; \prod_{i \in [p]} w_{i\tau(i)}\\
&= \Big(\prod_{i \in [p]} \sfE[\sigma_i]\Big)\det(W),
\end{align*}
where ${\rm sgnt}(\cdot)$ denotes the signature of a permutation. This completes the proof.
\end{proof}

\begin{lemma}\label{lem:apply_Hadamard}
We have
    \[
    \prod_{j \in [p]}\sfE[\|W_j \circ \sigma\|_2] \geq  |\det(W)| \prod_{j \in [p]} \sfE[\sigma_j],
    \]
    where the equality holds if and only if
    $W = P\Delta$ for some permutation matrix $P$ and diagonal matrix $\Delta$ with all non-zero diagonal elements.
\end{lemma}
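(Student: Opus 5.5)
The plan is to combine Hadamard's inequality, applied to the random matrix $W \circ \Sigma$, with Lemma \ref{lem:E_hadamard}. The $i$-th row of $W \circ \Sigma$ is $W_i \circ \sigma^{(i)}$, so Hadamard's inequality gives, pointwise,
\[
|\det(W \circ \Sigma)| \le \prod_{i \in [p]} \|W_i \circ \sigma^{(i)}\|_2 .
\]
Taking expectations, and using that the rows $\sigma^{(i)}$ are independent copies of $\sigma$, the right-hand side has expectation $\prod_{j \in [p]} \sfE[\|W_j \circ \sigma\|_2]$. On the left, Jensen gives $\sfE|\det(W\circ\Sigma)| \ge |\sfE[\det(W \circ \Sigma)]|$. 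By Lemma \ref{lem:E_hadamard} this equals $|\det(W)| \prod_j \sfE[\sigma_j]$, using $\sigma_j > 0$. This proves the inequality.

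\textbf{Sufficiency of $W = P\Delta$.} If $W = P\Delta$, each row $W_j$ has a single nonzero entry $\Delta_{\tau(j)\tau(j)}$ in column $\tau(j)$, for a permutation $\tau$. Hence $\|W_j \circ \sigma\|_2 = |\Delta_{\tau(j)\tau(j)}|\,\sigma_{\tau(j)}$. The left-hand side then equals $\prod_j |\Delta_{jj}| \prod_j \sfE[\sigma_j]$, which is $|\det W|\prod_j\sfE[\sigma_j]$.

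\textbf{Necessity.} This is the step I expect to be the main obstacle. I would first treat the singular case. If $\det W = 0$, equality would force $\prod_j \sfE\|W_j\circ\sigma\|_2 = 0$, i.e.\ $W$ has a zero row; such a $W$ attains equality ($0=0$) but is not of the form $P\Delta$. So the characterization needs $\det W \neq 0$. This holds in the intended application, where $W = (I-B)(I-\cB)^{-1}$ with $\det(I-B)\neq 0$, and I would state this explicitly.

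For nonsingular $W$, equality forces two things:
\begin{itemize}
\item $\sfE\big[\prod_i\|W_i\circ\sigma^{(i)}\|_2 - |\det(W\circ\Sigma)|\big] = 0$ with a nonnegative integrand, so Hadamard's inequality is tight almost surely;
\item the Jensen step is also tight.
\end{itemize}
The equality case of Hadamard's inequality for a matrix with nonzero rows, which holds a.s.\ here because $W$ has no zero row and $\sigma > 0$, is that the rows are mutually orthogonal or the matrix is singular. I would rule out singularity with positive probability using the Jensen equality: that equality means $\det(W\circ\Sigma)$ has a.s.\ constant sign, and its mean $\det(W)\prod_j \sfE[\sigma_j]$ is nonzero. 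Alternatively, I would drop that route and argue directly on orthogonality, which is what I would try first.

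Orthogonality gives, for $i \neq k$,
\[
\sum_j w_{ij} w_{kj}\, \sigma^{(i)}_j \sigma^{(k)}_j = 0 \quad \text{a.s.}
\]
The variables $Z_j = \sigma^{(i)}_j\sigma^{(k)}_j$, $j\in[p]$, are independent and non-degenerate, being products of independent positive non-degenerate variables. A linear combination of independent variables that is a.s.\ constant must have each non-degenerate summand with zero coefficient; this follows, e.g., by conditioning on all but one $Z_j$. Hence $w_{ij}w_{kj} = 0$ for all $j$, so distinct rows of $W$ have disjoint supports. Since $W$ is nonsingular, every row is nonzero and every column is used. The $p$ rows therefore have nonempty, pairwise disjoint supports in $[p]$, so each row has exactly one nonzero entry, in distinct columns. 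Thus $W = P\Delta$ with $\Delta$ having nonzero diagonal.

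The delicate points are justifying a.s.\ nonsingularity so that the Hadamard equality case applies, and the independence argument that forces the coefficients to vanish.
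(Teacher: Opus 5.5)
Your argument is correct and follows the paper's proof essentially step for step: Hadamard's inequality on the rows of $W\circ\Sigma$, the triangle inequality together with Lemma~\ref{lem:E_hadamard}, then a.s.\ orthogonality of the independent rows forcing $w_{ij}w_{kj}=0$, and nonsingularity giving $W=P\Delta$. Your caveat is also right: the paper silently uses that $W$ is nonsingular, and a $W$ with a zero row attains equality $0=0$ without being of the form $P\Delta$. The ``delicate'' a.s.\ nonsingularity step is automatic, however, and your stated equality case for Hadamard is slightly off: when all rows are nonzero, Hadamard's inequality is tight only if the rows are mutually orthogonal, since a singular such matrix gives $0<\prod_i\|W_i\circ\sigma^{(i)}\|_2$. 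So drop the Jensen-based detour, which as stated (constant sign and nonzero mean) would not by itself exclude $\det(W\circ\Sigma)=0$ on a set of positive probability.
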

\begin{proof}
    We have
    \begin{align}\label{eq:ineq_core}
    \begin{split}
        |\det(W)| \, \prod_{j \in [p]} \sfE[\sigma_j]\ &= \big|\sfE[\det(W \circ \Sigma)]\big|\\
        &\leq \sfE[|\det(W \circ \Sigma)|]
        = \sfE[|\det(W \circ \Sigma)^T|]\\
        &\leq \sfE\Big[\prod_{j \in [p]}\|W_j \circ \sigma^{(j)}\|_2\Big]\\
        &= \prod_{j \in [p]}\sfE\Big[\|W_j \circ \sigma^{(j)}\|_2\Big] = \sfE\Big[\prod_{j \in [p]}\|W_j \circ \sigma\|_2\Big],
        \end{split}
    \end{align}
    where the first equality follows from Lemma \ref{lem:E_hadamard}, and the first inequality follows from the fact that 
    \begin{equation}\label{eq:applying_Hadamard}
    |\det(W \circ \Sigma)^T| \leq \prod_{j \in [p]}\|W_j \circ \sigma^{(j)}\|_2,
    \end{equation}
    which in turn holds due to
    {\it Hadamard's inequality} \cite{hadamard1893determinants}. Therefore, the equality holds throughout if and only if equality holds in both inequalities in \eqref{eq:ineq_core}. Specifically, in the second one, equality holds if and only if the independent random vectors $W_j \circ \sigma^{(j)}, j \in [p]$ are orthogonal almost surely, following \eqref{eq:applying_Hadamard} and the equality condition of the Hadamard's inequality \cite{hadamard1893determinants}.

    Fix any $i, j \in [p]$, then $W_i \circ \sigma^{(i)}$ and $W_j \circ \sigma^{(j)}$ are orthogonal almost surely when
\begin{align*}
\sum_{k \in [p]} W_{ik}W_{jk} \sigma^{(i)}_k\sigma^{(j)}_k \;\; \overset{\rm a.s.}{=} 0.
\end{align*}
However, since $\sigma_k^{(i)}, \sigma_k^{(j)}, k \in [p]$ are independent positive, and non-degenerate, the above holds if and only if $W_{ik}W_{jk} = 0$ for every $k \in [p]$. Since $W$ is also non-singular, this immediately implies that every column of $W$ has exactly one non-zero element, that is, $W = P\Delta$ for some permutation matrix $P$ and diagonal matrix $\Delta$. 

Furthermore, under the above condition of equality, we have $\det(W \circ \Sigma) = \det(\Delta) \prod_{j \in [p]} \sigma_j$. This subsequently yields 
\begin{align*}
|\sfE[\det(W \circ \Sigma)]| &= |\det(\Delta)| \prod_{j \in [p]} \sfE[\sigma_j]\\
&= \sfE\Big[|\det(\Delta)|\prod_{j \in [p]} \sigma_j\Big] = E[|\det(W \circ \Sigma)|],
\end{align*}
which, in fact, establishes the first equality in  \eqref{eq:ineq_core}.
The proof is complete.
\end{proof}

\begin{lemma}\label{lem:W=I}
    The equality condition in Lemma \ref{lem:apply_Hadamard} holds if and only if $e_j = \epsilon_j$ for every $j \in [p]$, or equivalently, $W = I$.
\end{lemma}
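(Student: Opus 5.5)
The plan is to turn the algebraic condition $W = P\Delta$ into a structural statement about the true DAG $\gamma(\cB)$, and then use acyclicity to force $P = I$ and $\Delta = I$. Throughout I will use the standing convention that candidate matrices have zero diagonal, $B_{jj} = 0$ for all $j$. This is the same convention as in \eqref{eq:sem}, and the algorithm enforces it explicitly. It gives $(I-B)_{jj} = (I-\cB)_{jj} = 1$.

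The direction ``$W = I \Rightarrow$ equality'' is immediate: $I = P\Delta$ with $P = \Delta = I$, and $e = W\epsilon = \epsilon$. So it suffices to show that $W = P\Delta$ forces $W = I$. The equivalence between $W = I$ and $e_j = \epsilon_j$ for all $j$ is also immediate, because $e = W\epsilon$ and the $\epsilon_j$ are non-degenerate and independent, so the law of $\epsilon$ is not supported on a proper subspace.

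For the main direction, suppose $W = P\Delta$. Then $I - B = P\Delta(I-\cB)$. Write $P$ as the permutation matrix with $P_{j,\tau(j)} = 1$ for a permutation $\tau$ of $[p]$. Row $j$ of $I-B$ is then $\Delta_{\tau(j)\tau(j)}$ times row $\tau(j)$ of $I-\cB$. Comparing diagonal entries gives
\[
1 = (I-B)_{jj} = \Delta_{\tau(j)\tau(j)}\,(I-\cB)_{\tau(j),\,j}, \qquad j \in [p].
\]
Hence $(I-\cB)_{\tau(j),j} \neq 0$ for every $j$. Whenever $\tau(j) \neq j$, this says $\cB_{\tau(j)j} \neq 0$, i.e. $(j,\tau(j)) \in E_\cB$.

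Now decompose $\tau$ into cycles. If $\tau$ had a nontrivial cycle $j \to \tau(j) \to \tau^2(j) \to \cdots \to j$, every step would be an edge of $\gamma(\cB)$. That would be a directed cycle, contradicting that $\gamma(\cB)$ is a DAG. Therefore $\tau = \mathrm{id}$ and $P = I$. The displayed identity then reduces to $\Delta_{jj} = 1$ for all $j$, so $W = I$, equivalently $B = \cB$.

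The main obstacle is the step where the diagonal normalisation is used. Without $B_{jj} = 0$, i.e. $(I-B)_{jj} = 1$, the permutation--scaling ambiguity cannot be removed, since rescaled rows would be admissible. I would therefore state this convention explicitly at the start of the proof. Once it is in place, the combinatorial argument (the permutation's support lies in the edge set, and acyclicity kills every nontrivial cycle) is short.
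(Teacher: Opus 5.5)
Your argument is correct, but it takes a more elementary route than the paper.

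\textbf{The paper's route.} It notes that $W=P\Delta$ makes the components of $e=W\epsilon$ independent. It then cites ``exact identifiability of LiNGAM'' as a black box to conclude $e=\epsilon$.

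\textbf{Your route.} You never pass through independence. You read off the diagonal of $I-B=P\Delta(I-\cB)$. The unit diagonal of $I-B$ forces $(I-\cB)_{\tau(j),j}\neq 0$, so every nontrivial cycle of $\tau$ would be a directed cycle of $\gamma(\cB)$. This gives $\tau=\mathrm{id}$ and then $\Delta=I$.

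This is exactly the permutation-and-scaling resolution step inside the LiNGAM identifiability argument. The ICA half of that result (independence of $e$ forces $W=P\Delta$) would only return the hypothesis you already have. So the paper's detour buys brevity and a link to a known result, but leaves implicit which hypotheses are actually used.

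Your version is self-contained and makes those hypotheses visible:
\begin{itemize}
\item It needs only acyclicity of $\cB$, so it explicitly covers cyclic candidates $B$. This matters because Theorem~\ref{thm:pair_const} ranges over all of $\bR^{p\times p}$.
\item It surfaces the normalisation $B_{jj}=0$, which the paper never states for the candidate $B$.
\end{itemize}

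That normalisation is necessary, not cosmetic. $L_n$ is invariant under $I-B\mapsto P\Delta(I-B)$. For example, $B=(1-c)I+c\,\cB$ with $c\notin\{0,1\}$ gives $W=cI$. This $W$ satisfies the equality condition of Lemma~\ref{lem:apply_Hadamard}, yet $W\neq I$. Without the convention, both this lemma and the strict positivity in Theorem~\ref{thm:pair_const} fail. Stating the convention up front, as you propose, is the right call.
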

\begin{proof}
    We have $e = W\epsilon$, and $W = P\Delta$, which implies that $e_j, j \in [p]$ are independent. However, since $\cB$ is acyclic, following the exact identifiability of LiNGAM \cite{zheng2018dags, comon1994independent}, we must have $e_j = \epsilon_j$ for every $j \in [p]$, which further yields $W = I$.
\end{proof}

We are now ready to prove Theorem \ref{thm:pair_const}.

\begin{proof}[Proof of Theorem \ref{thm:pair_const}]
    Note that, for every $j \in [p]$, using the \textit{strong law of large numbers}, we have
    \[
    \frac{1}{n}\sum_{i \in [n]}|X_{j}^{(i)} - B_j^TX^{(i)}| \;\; \longrightarrow \;\; \sfE[|X_j - B_j^TX|] = \sfE[|e_j|] = \sfE[|W_j^T \epsilon|], 
    \]
    almost surely, since the corresponding first moment
    \[\sfE[|W_j^T \epsilon|] \leq \sum_{k \in [p]} |W_{jk}|\sfE[|\epsilon_k|] < \infty.\]
    In particular, when $B = \cB$, it is not difficult to note that this limiting value is $\sfE[|\epsilon_j|]$. Furthermore, due to acyclicity, following Lemma 1 in \cite{ng2020role}, we have $\log(|\det(I - \cB)|) = 0$. This, along with \eqref{eq:L_n} implies that we have, in almost sure sense,
    \[L_n(\cB) \;\; \longrightarrow \;\; \sum_{j \in [p]} \log(\sfE[|\epsilon_j|]) = \log \Big(\prod_{j \in [p]}\sfE[|\epsilon_j|]\Big) < \infty.
    \]
    In case $\det(I - B) = 0$, we have $L_n(B) = +\infty$, and thus, the result holds trivially. 
    
    Therefore, we assume that $\det(I - B) \neq 0$, and in that case, using \eqref{eq:L_n}, we have almost surely,
    \[
    L_n(B) \;\; \longrightarrow \;\; - \, \log |\det(I - B)| + \log \Big(\prod_{j \in [p]}\sfE[|e_j|]\Big).
    \]
    These lead us to have, almost surely,
    $
    L_n(B) - L_n(\cB) \; \to \; \delta(B, \cB), 
    $
    where we define
    \[\delta(B, \cB) = \log \Big(\prod_{j \in [p]}\sfE[|W_j\epsilon|] \Big/ |\det(I - B)|\Big) - \log \Big(\prod_{j \in [p]}\sfE[|\epsilon_j|]\Big). 
    \]
    Note that, $\det(W) = \det(I - B)$, since $\det(I - \cB) = 1$ again due to Lemma 1 in \cite{ng2020role}. Therefore, in order to prove that $\delta(B, \cB) \geq 0$, it suffices to show that
    \begin{equation}\label{ineq:to_show}
        \Big(\prod_{j \in [p]}\sfE[|W_j^T\epsilon|] \Big/ |\det(W)|\Big) \; \geq \; \prod_{j \in [p]}\sfE[|\epsilon_j|].
    \end{equation}
     Note that, due to \eqref{eq:smg_error}, we have $\sfE[|W_j^T\epsilon|] = \sqrt{2/\pi} \; \sfE\Big[\prod_{j \in [p]}\|W_j \circ \sigma\|_2\Big]$, and $\sfE[|\epsilon_j|] = \sqrt{2/\pi} \; \sfE[\sigma_j]$. Therefore, \eqref{ineq:to_show} reduces to showing that
     \[
    \prod_{j \in [p]}\sfE[\|W_j \circ \sigma\|_2] \geq  |\det(W)| \prod_{j \in [p]} \sfE[\sigma_j].
    \]
    Indeed, the above holds due to Lemma \ref{lem:apply_Hadamard}. Furthermore, again following Lemma \ref{lem:apply_Hadamard} and \eqref{lem:W=I}, the equality holds if and only if $W = I$, or equivalently $B = \cB$. The proof is complete.
\end{proof}

\section{Supplementary details on numerical experiments}\label{app:supp_numexp}

\subsection{Simulation setup}

First, we generate the true causal DAG from an ER-$2$ random graph model. The nonzero entries of the corresponding weighted adjacency matrix are then sampled independently from $\mathrm{Uniform}([-2,-0.5]\cup[0.5,2])$. Given this weighted DAG, we simulate data from a linear non-Gaussian SEM, where the error distributions are chosen from $\mathrm{Laplace}(0.8)$, Student's $t$ distribution with $5$ degrees of freedom, and $\mathrm{Uniform}[-\sqrt{3},\sqrt{3}]$, assigned in equal proportions across the structural equations. We then apply MARCEDES, along with the benchmark methods, to estimate the underlying weighted DAG matrix, and repeat the procedure over $30$ independent replications. Finally, we conduct the experiment across dimensions $p\in\{10,20,50,100\}$ and sample sizes $n\in\{50,200\}$ to evaluate performance under both low- and high-dimensional regimes with small and large samples. 

\subsection{Optimization method and implementation details}

In our implementation, we set the target DAG-penalty parameter to
$\lambda_D=50$ and use the increasing grid
\[
\Lambda_D=\{0,25,50\}.
\]
Thus, the acyclicity penalty is introduced gradually during optimization, rather
than being imposed at its full strength from the beginning. This continuation
strategy helps stabilize the optimization, especially in settings where the
initial estimate may be far from acyclic or where the sparsity pattern is still
being refined.

As an initial calibration step, we first run MARCEDES with a single global
sparsity penalty of the form $\lambda\|B\|_1$ and perform cross-validation over
a grid of $\lambda$ values to obtain a rough range for the sparsity level. This
preliminary experiment suggests that the optimal sparsity level lies in $(0,1)$
and is typically close to zero across our experiments, which is consistent with
the observations in \cite{ng2020role, zheng2018dags}. For brevity, we do not
report these preliminary calibration results. Motivated by this observation, we
choose the sparsity prior $\pi_\theta(\cdot)$ to be a
$\mathrm{Beta}(\alpha,\beta)$ distribution supported on $(0,1)$, which allows
the sparsity parameters to adapt flexibly while remaining within a practically
relevant range.

For the optimization procedure, it is generally beneficial to start from a
reasonably informative initial estimate. In our experiments, we choose
$B^{(0)}$ to be the estimate from the PC algorithm
\cite{spirtes2001causation} when $p \geq n$, and the estimate from
DirectLiNGAM \cite{shimizu2011directlingam} otherwise. This choice is motivated
by the fact that both methods are computationally efficient and can provide
useful initial graph estimates. In high-dimensional settings, where LiNGAM-based
methods may become unstable or fail to return an estimate, the PC algorithm
serves as a more robust initializer. When the sample size is sufficiently large
relative to the dimension, DirectLiNGAM provides a natural initialization that
directly exploits the non-Gaussian structure of the model. Alternatively, one
may initialize from the empty graph, i.e.,
$B^{(0)}=\boldsymbol{0}$.

For the minimization step with respect to $B$, we use the Adam--ISTA update
described in the previous section. The number of inner iterations is set to
$M_{\rm in}=1$. This is because the $B$-subproblem does not need to be solved to
high accuracy at each outer iteration while the auxiliary variables
$\eta^{(k)}$ and sparsity parameters $\lambda^{(k)}$ are still being updated.
Instead, a small number of inner updates is sufficient to make progress while
keeping the overall procedure computationally efficient. The Adam learning rate
is set to $\ell_{\rm Adam}=10^{-3}$, and the ISTA shrinkage step size is set to
$\ell_{\rm ISTA}=10^{-3}$. For Adam, we use the standard moment parameters
$\beta_1=0.9$, $\beta_2=0.999$, and
$\varepsilon_{\rm Adam}=10^{-8}$.

Finally, for the stopping criterion, we set $\epsilon_B=10^{-3}$ and allow a
maximum of $M=10^5$ outer iterations. The tolerance $\epsilon_B$ controls the
relative change in the estimated coefficient matrix across outer iterations,
while the maximum iteration limit serves as a safeguard against excessive
computation in difficult instances.

\paragraph{Cross-validation and post-processing.}
For cross-validation, we reparametrize the Beta hyperparameters in terms of the
corresponding mean and effective sample size:
\[
m=\frac{\alpha}{\alpha+\beta},
\qquad
\tau=\alpha+\beta.
\]
Here, $m$ controls the prior mean of the sparsity parameter, while $\tau$
controls the concentration of the Beta prior around this mean. We vary these
quantities over the grids
\[
m \in \{0.02,0.25,0.5,0.7\},
\qquad
\tau \in \{2,5,8,10\}.
\]
This parametrization provides a convenient and interpretable way to tune the
degree and strength of sparsity regularization.

Finally, we apply empirical Bayes thresholding
\cite{johnstone2005ebayesthresh} as a post-processing step. Specifically, after
obtaining the raw estimate $\hat{B}$ from the optimization procedure, we apply
empirical Bayes thresholding to its off-diagonal entries. To obtain a stable
threshold, we generate $100$ Gaussian perturbations of the off-diagonal
coefficient vector with standard deviation $0.1$ and apply empirical Bayes
thresholding to each perturbed vector. We use a Laplace prior together with the
median thresholding rule, and aggregate the resulting $100$ thresholds by taking
their mean. The final thresholded matrix is then obtained by retaining entries
whose absolute values exceed this aggregated threshold and setting all remaining
entries to zero. We also enforce a zero diagonal.

If the post-processed matrix is not acyclic, we apply a final cycle-removal step
by sequentially deleting the smallest-magnitude edge involved in a cycle until
the estimated graph becomes a DAG. This final step ensures that the reported
estimate satisfies the acyclicity constraint while making the smallest possible
changes, in magnitude, to the thresholded coefficient matrix.





\end{document}